  \providecommand\BibTeX{{%
    \normalfont B\kern-0.5em{\scshape i\kern-0.25em b}\kern-0.8em\TeX}}}
\newcommand{\ms}{\overset{\rightarrow}{m}}
\newcommand{\mydots}{\hbox to 1em{.\hss.\hss.}}
\begin{document}

%
\title[A Robust Distance Metric]{A Formally Robust Time Series Distance Metric}

%

\author{Maximilian Toller}
\affiliation{%
  \institution{Know-Center GmbH}
  \streetaddress{Inffeldgasse 13}
  \city{Graz}
  \country{Austria}}
\email{mtoller@know-center.at}

\author{Bernhard C. Geiger}
\affiliation{%
	\institution{Know-Center GmbH}
	\streetaddress{Inffeldgasse 13}
	\city{Graz}
	\country{Austria}}
\email{bgeiger@know-center.at}
\email{geiger@ieee.org}

\author{Roman Kern}
\affiliation{%
  \institution{Graz University of Technology}
  \streetaddress{Inffeldgasse 13}
  \city{Graz}
  \country{Austria}}
\email{rkern@tugraz.at}

%

%
\begin{abstract}
Distance-based classification is among the most competitive classification methods for time series data.
The most critical component of distance-based classification is the selected distance function.
Past research has proposed various different distance metrics or measures dedicated to particular aspects of real-world time series data, yet there is an important aspect that has not been considered so far: \textit{Robustness against arbitrary data contamination}.
In this work, we propose a novel distance metric that is robust against arbitrarily ``bad'' contamination and has a worst-case computational complexity of $\mathcal{O}(n\log n)$.
We formally argue why our proposed metric is robust, and demonstrate in an empirical evaluation that the metric yields competitive classification accuracy when applied in k-Nearest Neighbor time series classification.
\end{abstract}

%
%

\begin{CCSXML}
	<ccs2012>
	<concept>
	<concept_id>10002950.10003648.10003688.10003693</concept_id>
	<concept_desc>Mathematics of computing~Time series analysis</concept_desc>
	<concept_significance>500</concept_significance>
	</concept>
	<concept>
	<concept_id>10010147.10010257.10010293.10003660</concept_id>
	<concept_desc>Computing methodologies~Classification and regression trees</concept_desc>
	<concept_significance>300</concept_significance>
	</concept>
	<concept>
	<concept_id>10002951.10003227.10003351.10003444</concept_id>
	<concept_desc>Information systems~Clustering</concept_desc>
	<concept_significance>100</concept_significance>
	</concept>
	</ccs2012>
\end{CCSXML}

\ccsdesc[500]{Mathematics of computing~Time series analysis}
\ccsdesc[300]{Computing methodologies~Classification and regression trees}
\ccsdesc[100]{Information systems~Clustering}

%
\keywords{time series, distance metric, robustness, classification, clustering}

%
\maketitle

\section{Introduction}\label{sec:introduction}
Time series data classification is an important task in many domains such as data mining, machine learning and econometrics.
Extensive past evaluations~\cite{ding2008querying} have shown that k-Nearest Neighbor (k-NN) classification is among the most competitive classification approaches for time series data.
In simple terms, k-NN classification assigns a query time series instance the class based on its k nearest neighbors in a labeled training set.
As such, the k-NN classifier is a distance-based classifier, since its distance function is the only component that discriminates between classes.
The same applies to distance-based clustering algorithms~\cite{keogh2003need,rani2012recent}.

The data mining and machine learning communities have proposed numerous different distance functions for improving classification and clustering accuracies on benchmark datasets~\cite{ding2008querying,abanda2019review} and for accelerating the practical computation~\cite{keogh1997probabilistic,keogh2000scaling,keogh2001dimensionality,xi2006fast,mueen2016extracting}.
However, there is an important aspect of distance measures that was not considered so far to the best of our knowledge:
\textit{Robustness against arbitrary data contamination}.
While previous research has proposed distance measures that are ``robust'' against additive white Gaussian noise~\cite{keogh1997fast} or against temporal misalignment~\cite{sakoe1990dynamic}, we follow the definition used in the field of robust statistics.
A crucial measure for determining the robustness of a distance function is breakdown point (BP) analysis~\cite{huber2011robust}.
The  asymptotic BP describes the amount of contamination in the data that an estimator (in this case a distance function) can tolerate before it will be fully biased by the contamination in the worst case.
For example, the Euclidean distance has an  asymptotic BP of zero: If a single observation in one of the time series instances it compares is contaminated to (plus or minus) infinity, then the Euclidean distance becomes infinite as well, regardless of the remaining observations.

To address this issue, one may propose to use the raw Edit distance~\cite{kukich1992techniques}, as it is robust against arbitrary contamination at a few observations.
However, Edit distance is susceptible to a different type of contamination that is routinely overlooked, as it is trivially fulfilled by most distance measures. 
If time series data are subject to a tiny contamination at every single data point, then the Edit distance will become very large.
One may be tempted to address this issue by defining a small tolerance interval suggested by Chen et al.~\cite{chen2005robust}, yet this is difficult if the variance of the data is large or time-dependent, which is a well-known behavior of many econometric time series~\cite{mandelbrot1967variation,engle1982autoregressive}.
Further, the time series classification accuracy of the raw Edit distance is poor for real time series data.
If one extends Edit distance to an elastic (non-lockstep) variant thereof, such as Edit distance with real penalty~\cite{chen2004marriage}, then the classification accuracy may increase, yet the asymptotic BP immediately drops to $0$ .

Since all existing distance measures either have a low asymptotic BP or else yield a low classification accuracy, we aim to fill this gap.
In this work, we propose a novel distance metric which is formally robust according to Huber's definition~\cite{huber2011robust} against a small percentage of contaminated observations, and robust against tiny deviations at many observations.
Additionally, we show that its classification accuracy is not significantly different from other distance metrics and that our metric has a worst-case computational complexity of $\mathcal{O}(n\log n)$.
The source code of our implementation and a script that reproduces all results can be found online.\footnote{\url{https://github.com/mtoller/robust-distance-metric}}

\section{Notation and Problem Formulation}\label{sec:problem_definition}
\subsection{Theoretical Concepts}
Let $x=\{x_t,t\in1,\mydots,n\}$ and $y=\{y_t,t\in 1,\mydots,n\}$ be two time series instances and $d:\mathbb{R}^n\times\mathbb{R}^n\rightarrow\mathbb{R}$ a distance function for comparing them.
For an efficient distance-based classification, it is advantageous if $d(\cdot)$ is a \textit{metric}, as this allows a variety of run-time acceleration techniques~\cite{faloutsos1994fast,hjaltason2003index}.
To be a metric, $d(\cdot)$ has to fulfill the following properties for all $x,y\in\mathbb{R}^n$:
\begin{itemize}
	\item $d(x,y) \ge 0$ \hfill\textit{Non-negativity}
	\item $d(x,y)=0\Leftrightarrow x=y$ \hfill\textit{Identity of Indiscernibles}
	\item $d(x,y)=d(y,x)$ \hfill\textit{Symmetry}
	\item $d(x,z)\le d(x,y)+d(y,z)$ \hfill\textit{Triangle Inequality}
\end{itemize}
A typical example for a distance metric is the Euclidean distance 
\begin{equation}
	e(x,y)=\sqrt{\sum_{t=1}^{n}(x_t-y_t)^2}.\label{eq:euclid}
\end{equation}
If a distance function $d(\cdot)$ fulfills all properties except the identity of indiscernibles, then it is called a \textit{pseudometric}.

To evaluate the robustness of a distance function, we adapt the definition of the breakdown point given in~\cite{huber2011robust}.
Specifically, let $d_{\sup}=\sup_{x,y\in\mathbb{R}^n}d(x,y)$ be the largest possible value the distance function can obtain theoretically.
Then, the breakdown point $\beta^\star_d(n)$ is given by
\begin{equation}
 \beta^\star_d(n) = \min\left\{\frac{k}{n}\Big\vert\sup d(x,x+K) = d_{\sup} \right\}
\end{equation}
where the supremum is over all $x\in\mathbb{R}^n$ and over all contamination processes $K=\{K_t,t\in1,\mydots,n\}$ that assume arbitrary non-zero values in at most $k$ positions and zero otherwise.
In simple terms, the breakdown point describes the highest percentage of contaminated observations that function $d(\cdot)$ can tolerate.
For example, it is evident that for the Euclidean distance contaminating a single time point suffices, i.e., if $K_1=\infty$ and $K_t=0$ for $t=2,\mydots,n$, then $e(x,x+K)=d_{\sup}=\infty$ for every $x\in\mathbb{R}^n$; thus, $\beta^\star_e(n)=1/n$.
For clarity, the \textit{asymptotic} BP is obtained by evaluating $\beta^\star_d(n)$ as $n$ tends to infinity.


\subsection{Classification-Specific Aspects}
To link the theoretical concept of breakdown points with practical classification, we formulate two classification-specific notions of robustness.
To this end, let $\mathcal{C} = \{\mathcal{C}_1,\mydots,\mathcal{C}_r$\} denote a set of time series classes and $d(\cdot)$ a candidate distance function.


\begin{definition}~\label{def:contamination_tolerance}
	\textit{Contamination Tolerance}: A distance function $d(\cdot)$ tolerates $\hat{k}$ contaminated observation w.r.t.\ $\mathcal{C}$ if 
	\begin{equation}
		\forall i,j\neq i \in 1,\mydots,r:\forall x\in\mathcal{C}_i:\; \forall y \in\mathcal{C}_{j}: d(x,x+K) < d(x,y).
		\label{eq:contamination_tolerance}
	\end{equation}
	holds for every contamination processes $K=\{K_t,t\in1,\mydots,n\}$ that assumes arbitrary non-zero values in at most $\hat{k}$ positions.
\end{definition}
Intuitively, assume that a distance function $d(\cdot)$ ideally separates class $\mathcal{C}_i$ from other classes $\mathcal{C}_{j,j\neq i}$.
Function $d(\cdot)$ will tolerate up to $\hat{k}$ contaminated observations if the distance between an uncontaminated time series instance $x$ and a contaminant variant thereof $x+K$ is smaller than the distance between $x$ and an instance from a different class $y$.

To specify imprecision invariance as mentioned in Section~\ref{sec:introduction}, i.e. invariance to tiny changes, we introduce an imprecision process $\{\varepsilon_t\}$ that is negligibly small at all $t$.
Specifically, we assume that, for all $t$, $|\varepsilon_t|\le\varepsilon_{\max}$, where $\varepsilon_{\max}$ is much smaller than the standard deviation (or some norm) of the time series $x$.

\begin{definition}
	\textit{Imprecision Invariance}: A distance function $d(\cdot)$ is invariant to an imprecision of $\varepsilon_{\max}$ w.r.t.\ $\mathcal{C}$ if
	\begin{equation}
		\forall i,j\neq i \in 1,\mydots,r:\forall x\in\mathcal{C}_i:\; \forall y \in\mathcal{C}_{j}: 
		d(x,x+\varepsilon) < d(x,y)
		\label{eq:imprecision invariance}
	\end{equation}
	holds for every imprecision processes $\varepsilon=\{\varepsilon_t,t\in1,\mydots,n\}$ that satisfies $|\varepsilon_t|\le\varepsilon_{\max}$ for every $t$.
\end{definition}
In other words, assume distance function $d(\cdot)$ perfectly discriminates class $\mathcal{C}_i$ from $\mathcal{C}_{j,j \neq i}$.
Function $d(\cdot)$ is invariant to an imprecision of $\varepsilon_{\max}$ if the distance between an instance $x$ and almost the same instance $x+\varepsilon$ is smaller than the distance between $x$ and an instance from another class $y$.

Contamination tolerance and imprecision invariance are very different properties.
There are not many metrics that fulfill both simultaneously: For example, no metric induced by an $\mathcal{L}^p$ norm with a finite $p \ge 1$ is contamination tolerant for any non-zero $\hat{k}$. Also, while many popular metrics are imprecision invariant, some metrics such as the Edit distance are susceptible to it.


\section{Methods}~\label{sec:ensemble}
In this section, we present a novel metric which can tolerate considerable contamination and is invariant to imprecision.
The metric is obtained by aggregating an ensemble of metrics and pseudometrics in a way that preserves their discriminatory power while guaranteeing robust results.

\subsection{Metric Ensemble Members}
The ensemble consists of three metrics and three pseudometrics.
The distances measured by these metrics are combined via a scaling function and an arbitrary $\mathcal{L}^p$ norm, with $p \ge 1$, to obtain the metric $\mathcal{E}(x,y)$.
A summary of the ensemble members can be seen in Table~\ref{tab:ensemble_members}.
\begin{table}
	\caption{The components of the proposed ensemble metric $\mathcal{E}$. The top three members are metrics, while the bottom three are pseudometrics.}
	\begin{tabular}{l|c}
		Member name & Definition \\\toprule
		Euclidean distance & $e(x,y)$\\
		Log-distance &$\ell(x,y)$\\
		Raw Edit distance & $\textrm{Edit}(x,y)$\\
		\midrule
		Robust Euclidean distance & $e(\ms(x),\ms(y))$\\
		Robust Log-distance &$\ell(\ms(x),\ms(y))$\\
		Robust Raw Edit distance & $\textrm{Edit}(\ms(x),\ms(y))$\\
		\bottomrule
	\end{tabular}
	
	\label{tab:ensemble_members}
\end{table}
\begin{definition}
	\textit{Log-distance}: Let $x,y\in\mathbb{R}^n$ be two real-valued $n$-dimensional observations. Then, the Log-distance $\ell(\cdot)$ between $x$ and $y$ is given by
	\begin{equation}
	\ell(x,y)=\sum_{t=1}^n\log(1+|x_t-y_t|).
	\end{equation}
\end{definition}

\begin{proposition}
	The Log-distance $\ell(\cdot)$ is a metric. \label{prop:log_distance}
\end{proposition}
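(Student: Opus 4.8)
The plan is to verify the four metric axioms, of which the first three are immediate and the triangle inequality is the only substantial step. Writing $g(u) = \log(1+u)$ for $u \ge 0$, I first note that $g$ is strictly increasing, that $g(0) = 0$, and that $g(u) > 0$ for $u > 0$. Since every summand of $\ell(x,y)$ has the form $g(|x_t - y_t|)$, non-negativity follows because $g \ge 0$ on $[0,\infty)$; symmetry follows because $|x_t - y_t| = |y_t - x_t|$; and the identity of indiscernibles follows because a finite sum of non-negative terms vanishes if and only if every term vanishes, and $g(|x_t - y_t|) = 0$ forces $|x_t - y_t| = 0$, i.e.\ $x_t = y_t$, for each $t$.

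For the triangle inequality I would argue termwise: it suffices to prove, for each coordinate $t$, that $g(|x_t - z_t|) \le g(|x_t - y_t|) + g(|y_t - z_t|)$, since summing these $n$ inequalities over $t$ immediately yields $\ell(x,z) \le \ell(x,y) + \ell(y,z)$. Fixing $t$ and abbreviating $a = |x_t - y_t|$, $b = |y_t - z_t|$, and $c = |x_t - z_t|$, the ordinary triangle inequality on $\mathbb{R}$ gives $c \le a + b$. By monotonicity of $g$ this reduces the coordinate claim to the \emph{subadditivity} of $g$, namely $g(a+b) \le g(a) + g(b)$, after which $g(c) \le g(a+b) \le g(a) + g(b)$ delivers exactly what is required.

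The crux is therefore the subadditivity of $g(u) = \log(1+u)$, and this is where I expect the only genuine content to lie. I would establish it by exponentiating the right-hand side: $g(a) + g(b) = \log\bigl((1+a)(1+b)\bigr)$, and since $(1+a)(1+b) = 1 + a + b + ab \ge 1 + (a+b)$ whenever $a,b \ge 0$, monotonicity of $\log$ gives $g(a) + g(b) \ge \log(1 + a + b) = g(a+b)$. (Concavity of $g$ together with $g(0)=0$ would also yield subadditivity, but the algebraic identity above is cleaner and avoids any appeal to calculus.) Combining the three routine axioms with this termwise triangle inequality completes the argument.
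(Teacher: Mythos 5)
Your proof is correct and follows essentially the same route as the paper's own elementary proof in the appendix: the triangle inequality is reduced, coordinate-wise, to the inequality $(1+a)(1+b)\ge 1+a+b$ for $a,b\ge 0$, combined with monotonicity of $\log$. (The paper's main-text proof instead invokes Kelley's theorem on metric-preserving functions, but the underlying content --- strict monotonicity and subadditivity of $u\mapsto\log(1+u)$ with a zero only at $u=0$ --- is exactly what you verify by hand, and your explicit handling of the sum over coordinates is if anything slightly more careful than the paper's.)
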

\begin{proof}
	Since $\log(x): \mathbb{R}^+\rightarrow\mathbb{R}$ is a strictly monotonic subadditive function, $\log(1+x)$ is also a strictly monotonic subadditive function that is zero iff $x=0$.
	Consequently, $\log(1+|x-y|)$ also fulfills these properties and is a metric by Kelly's theorem~\cite[p.~131]{kelley2017general}. That the sum of metrics is a metric~\cite{dobovs1998metric} completes the proof.
\end{proof}

For 1-dimensional data, the Log-distance is asymptotically smaller than any $\mathcal{L}^p$ metric with $p \ge 1$, since the logarithm grows slower than an arbitrary polynomial, i.e. $\lim\limits_{z\to\infty}\frac{\log(z
)}{P(z)}=0,\quad z\in\mathbb{R}$.
This property is beneficial when one expects a small number of large outliers in time series data $x_t$.
$\mathcal{L}^p$ metrics such as the Euclidean distance will be much more influenced by a single large difference than several small deviations that sum up to the same value.
The Log-distance will weight several small changes higher than one large change due to subadditivity of the logarithm.

The remaining two metrics of the ensemble are the Euclidean distance $e(\cdot)$ 
as defined in Equation~\eqref{eq:euclid} and the raw Edit distance
\begin{equation}
	\textrm{Edit}(x,y) = \sum_{t=1}^{n} \phi_t,\quad \phi_t=\begin{cases}
	0 & x_t=y_t\\
	1 & x_t\neq y_t\\
	\end{cases}
\end{equation}
which is equivalent to the number of observations where $x_t$ and $y_t$ differ.
While the Edit distance tolerates up to $n-1$ contaminated observations and is sensitive to imprecision, the Euclidean distance $e(\cdot)$ is invariant to imprecision but sensitive to contamination.
The Log-distance $\ell(\cdot)$ aims to present a middle-ground between the two.
Compared to the Euclidean distance it is ``more'' sensitive  to imprecision and ``less'' sensitive to contamination, while the inverse holds when it is compared against the Edit distance.
However, in terms of robustness, the Euclidean distance and the Log-distance are asymptotically equivalent, since they have the same BP $\beta_e^\star(n)=\beta_\ell^\star(n)=\frac{1}{n}$.
Hence, when confronted with arbitrary contamination, both metrics become equally useless in the worst case.

\subsection{Pseudometric Ensemble Members}
To raise the BP of the metrics in the ensemble $\mathcal{E}$, one can introduce a function composition with a function that has a high BP while preserving metric properties.
Let $m(x)$ be the median of $x$.
As a measure of central tendency, the median has a BP of $\beta_m^\star(n)=0.5+\frac{1}{n}$ according to Huber's definition~\cite{huber2011robust}.
However, computing the median of time series data $x_t$ is meaningless, since it disregards the temporal structure of $x_t$ by treating it like an unordered data set.
To exploit the asymptotic robustness of the median in the context of time series, one can instead apply the median via a sliding window:

\begin{definition}
	Let $x$ be a time series instance and let $w$, an odd integer in $[3;n]$, be the size of a sliding window.
	The sliding median $\overset{\rightarrow}{m}: \mathbb{R}^n~\rightarrow~\mathbb{R}^{n-w+1}$ of $x$ is then defined as
	\begin{equation}
		\ms(x) = \{m(x_1,\mydots,x_w),m(x_2,\mydots,x_{w+1}),\mydots,m(x_{n-w+1},\mydots,x_{n})\}.
	\end{equation}
\end{definition}

If one computes the Euclidean, Log and Edit distance of $\ms$, then the result is no longer a metric --- the identity of indiscernibles becomes violated since the median is not an injective function.
However, the remaining metric properties are preserved:
\begin{proposition}\label{prop:median_pseudometric}
	Let $d(\cdot)$ be a metric. Then the sliding median distance $M_d(x,y)=d(\ms(x),\ms(y))$ is a pseudometric.
\end{proposition}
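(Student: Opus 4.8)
The plan is to recognize $M_d$ as the pullback of the metric $d$ along the map $\ms$, and to verify each pseudometric axiom by inheriting it directly from the corresponding axiom of $d$. The key structural observation is that for any fixed odd window size $w\in[3;n]$, the sliding median sends \emph{every} length-$n$ time series into the common space $\mathbb{R}^{n-w+1}$. Hence $\ms(x),\ms(y),\ms(z)$ always live in the same Euclidean space, and $d(\ms(x),\ms(y))$ is a well-defined application of the metric $d$ to two points of matching dimension. Once this is noted, the three properties follow essentially for free.

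First I would establish non-negativity and symmetry. Since $\ms(x),\ms(y)\in\mathbb{R}^{n-w+1}$ and $d$ is a metric on that space, $M_d(x,y)=d(\ms(x),\ms(y))\ge 0$ is immediate; likewise $M_d(x,y)=d(\ms(x),\ms(y))=d(\ms(y),\ms(x))=M_d(y,x)$ uses only the symmetry of $d$. Neither step invokes any property of the median beyond the fact that it produces a point in the target space.

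The only axiom that involves a third point is the triangle inequality, and it is the one place where I would have to write anything substantive. I would apply the triangle inequality of $d$ to the three images $\ms(x),\ms(y),\ms(z)\in\mathbb{R}^{n-w+1}$, obtaining $M_d(x,z)=d(\ms(x),\ms(z))\le d(\ms(x),\ms(y))+d(\ms(y),\ms(z))=M_d(x,y)+M_d(y,z)$. Again this is purely the metric structure of $d$ transported through $\ms$.

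Because the proposition claims only the pseudometric property, no further argument is needed, but it is worth recording where a genuine metric would fail: the identity of indiscernibles is lost precisely because $\ms$ is not injective. Two distinct series $x\neq y$ whose windows share identical per-window medians satisfy $\ms(x)=\ms(y)$, whence $M_d(x,y)=0$ despite $x\neq y$. The main difficulty here is conceptual rather than technical: one need only observe that this is an instance of the general fact that precomposing a metric with an arbitrary function yields a pseudometric, with injectivity of that function being exactly the ingredient required to promote it back to a metric.
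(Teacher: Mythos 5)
Your proposal is correct and follows essentially the same route as the paper: both treat $M_d$ as the pullback of $d$ along $\ms$ and inherit non-negativity, symmetry, and the triangle inequality directly from the corresponding axioms of $d$ applied to $\ms(x),\ms(y),\ms(z)$. Your added remarks---that the images all lie in the common space $\mathbb{R}^{n-w+1}$ so the application of $d$ is well-defined, and that identity of indiscernibles fails precisely because $\ms$ is not injective---are accurate and match the paper's surrounding discussion.
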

\begin{proof}
	Non-negativity, symmetry and triangle inequality follow trivially from the application of $d(\cdot)$.
\end{proof}

The sliding median distance $M_d(\cdot)$ has a BP of $\frac{w}{2n}$, since, if all contamination occurred at $\frac{w+1}{2}$ consecutive observations, the median of all $\frac{w-1}{2}$ windows containing these observations could be contaminated to an arbitrary value.

\subsection{Combining the Members}
Since all six ensemble members operate on different scales, it is desirable to convert them to the same scale without loss of generality.
Therefore we propose the following scaling function $S(\cdot)$ that is applied after distance computation and that preserves all metric properties:

\begin{definition}
	Let $d(\cdot)$ be an arbitrary distance function. The metric-preserving scaling $S:\mathbb{R^+}\rightarrow[0;1]$ of this metric is then defined as
	\begin{equation}
		S(d(x,y))=1-\frac{1}{1+d(x,y)}.
	\end{equation}
\end{definition}
\begin{lemma}
	Metrics are closed under scaling with $S(\cdot)$. \label{lemma:scaling}
\end{lemma}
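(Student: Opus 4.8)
The plan is to verify the four metric axioms for $S \circ d$, exploiting the convenient closed form $S(d(x,y)) = \frac{d(x,y)}{1+d(x,y)}$. Writing $f(t) = \frac{t}{1+t}$ for $t \ge 0$, I would first dispatch the three easy axioms. Non-negativity follows because $\frac{1}{1+t}\le 1$ for $t\ge 0$, so $f(t)\ge 0$; the identity of indiscernibles holds because $f(t) = 0$ exactly when $t = 0$, which combined with the same property for $d$ gives $S(d(x,y)) = 0 \Leftrightarrow x = y$; and symmetry is immediate, since $S$ is applied to the already-symmetric value $d(x,y)$.

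The only substantive step is the triangle inequality, and the strategy is the standard \emph{metric transform} argument: I would show that $f$ is monotonically increasing and subadditive on $[0,\infty)$, and then chain these facts with the triangle inequality for $d$. Monotonicity follows from $f'(t) = (1+t)^{-2} > 0$. Applying monotonicity to $d(x,z) \le d(x,y) + d(y,z)$ yields $f(d(x,z)) \le f\bigl(d(x,y) + d(y,z)\bigr)$, and then subadditivity, $f(a+b) \le f(a) + f(b)$, completes the chain to $S(d(x,z)) \le S(d(x,y)) + S(d(y,z))$.

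The key remaining ingredient, subadditivity, is the part I expect to be the main obstacle, but it admits a short direct proof. For $a,b \ge 0$ one splits the numerator and bounds each denominator from below:
\[
\frac{a+b}{1+a+b} = \frac{a}{1+a+b} + \frac{b}{1+a+b} \le \frac{a}{1+a} + \frac{b}{1+b},
\]
where the inequality uses $1+a+b \ge 1+a$ and $1+a+b \ge 1+b$. Alternatively, one can observe that $f$ is concave with $f(0)=0$, and any increasing concave function with $f(0)\ge 0$ is subadditive via the convex-combination estimate $f(a) \ge \frac{a}{a+b} f(a+b)$. Assembling the three elementary axioms with this triangle inequality shows that $S \circ d$ satisfies all metric properties, which completes the proof.
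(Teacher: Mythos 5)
Your proof is correct, and it sits between the paper's two proofs of this lemma in an instructive way. The paper's main-text proof invokes exactly the facts you establish --- that $S$ composed with $d$ is obtained by applying a concave, monotonically increasing function vanishing only at zero --- but then discharges the triangle inequality by citing Kelly's theorem on metric-preserving transforms. You instead inline the relevant special case of that theorem: monotonicity lets you pass from $d(x,z)\le d(x,y)+d(y,z)$ to $f(d(x,z))\le f(d(x,y)+d(y,z))$, and your direct subadditivity estimate
\[
\frac{a+b}{1+a+b}=\frac{a}{1+a+b}+\frac{b}{1+a+b}\le\frac{a}{1+a}+\frac{b}{1+b}
\]
finishes the chain; this is a complete and self-contained argument. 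The paper's appendix gives a second, citation-free proof, but it verifies the triangle inequality by clearing denominators and expanding both sides into a polynomial inequality in $d(x,y)$, $d(y,z)$, $d(x,z)$ --- several lines of algebra that ultimately reduce to $d(x,z)-d(x,y)-d(y,z)\le 0\le d(x,y)\,d(y,z)\,d(x,z)$. Your monotone-plus-subadditive decomposition buys a shorter and more transparent derivation than the appendix while avoiding the external theorem used in the main text, and it generalizes immediately to any increasing subadditive $f$ with $f(t)=0\Leftrightarrow t=0$, which neither of the paper's written proofs makes explicit. One cosmetic remark: the identity-of-indiscernibles step shows only that $S\circ d$ separates points when $d$ does, so for the pseudometric members of the ensemble the composition remains a pseudometric --- consistent with how the paper uses the lemma, but worth stating if you present the lemma as being about metrics only.
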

\begin{proof}
	$S(\cdot)$ is a concave, monotonically increasing function with $S(d(x,y))=0\Leftrightarrow d(x,y)=0$.
	Hence, it is a metric by Kelly's theorem~\cite{kelley2017general}.
\end{proof}

After scaling, the ensemble members can be combined into a single metric $\mathcal{E}(\cdot)$ via an arbitrary $\mathcal{L}^p$ norm with $p \ge 1$.
Specifically, we suggest the $\mathcal{L}^2$ norm.
The resulting function is a metric, since the sum of a pseudometric and a metric is a metric.
In particular, we propose the following ensemble:

\begin{equation}
	\mathcal{E}(x,y) \coloneqq \sqrt{
		\begin{aligned}
		&S(e(x,y))^2+S(\ell(x,y))^2+S(\textrm{Edit}(x,y))^2\\
		&+S(e(\ms(x)),\ms(y))^2+S(\ell(\ms(x),\ms(y))^2\\ &+S(\mathrm{Edit}(\ms(x),\ms(y)))^2
		\end{aligned}
	}		
\end{equation}

The ensemble $\mathcal{E}:\mathbb{R}^n\rightarrow[0;\sqrt{6}]$ has a BP of $\beta_\mathcal{E}^\star=1$.
This follows from the fact that the measurements of the non-robust metrics $e(\cdot)$ and $\ell(\cdot)$ are mapped onto the interval $[0,\sqrt{2}]$ and thus their influence on the ensemble is restricted.
The remaining members have a BP of $\frac{w}{2n}$ or higher, and the inclusion of the Edit distance raises the total BP to $1$.

The ensemble has a worst-case computational complexity of $\mathcal{O}(n\log n)$ under the assumption that $w=\mathcal{O}(n)$.
This arises from the ensemble's most expensive step, which is the computation of the sliding median $\ms$.
A more detailed explanation can be found in the Appendix.
%
%

\section{Practical Evaluation}\label{sec:evaluation}
In this section, we describe the experiments we conducted to show that the proposed ensemble metric $\mathcal{E}$ has competitive classification accuracy.
Further, we validate that $\mathcal{E}$ tolerates contamination and is imprecision invariant.
We compared $\mathcal{E}$ with Euclidean distance (Euc), Dynamic Time Warping~\cite{sakoe1990dynamic} (DTW) with window size $w=100$, Log-distance (Log), raw Edit distance (ED), and edit distance with a tolerance interval (EDR) set to $10\%$ of the median absolute deviation.

\subsection{Setup}
In our practical evaluation, we conducted three experiments to assess the following properties of the ensemble $\mathcal{E}$:
\begin{itemize}
	\item 1-NN classification error rate
	\item Contamination tolerance (cf. Equation~\eqref{eq:contamination_tolerance})
	\item Imprecision invariance (cf. Equation~\eqref{eq:imprecision invariance}) 
\end{itemize}
For all three experiments we used 83 selected benchmark data from the UCR Time Series Classification Archive~\cite{UCRArchive2018}.
All datasets which contained non-real data such as missing values were omitted, which was necessary since otherwise the behavior of the ensemble would be undefined.
Further, we were forced to omit all datasets in which either training or test datasets contained more that 1000 instances due to our limited computational resources.

For the classification accuracy experiment we computed the raw accuracy of a 1-NN classifier based on the ensemble metric and subtracted the resulting value from 1 to obtain the error rate.
To determine statistical significance, we then performed a Friedman's rank test~\cite{demvsar2006statistical}.

The dataset-dependent contamination tolerance was computed with the following procedure:
\begin{enumerate}[label=\roman*)]
	\item Assume $d(\cdot)$ perfectly separates all classes in the dataset.
	\item For every instance $x\in\mathcal{C}_i$, count for how many $y\in\mathcal{C}_{j,j\neq i}$ Equation~\eqref{eq:contamination_tolerance} holds when $k=0.05\times n$, i.e. 5\% of the observations are contaminated to be $\pm\infty$
	\item Compute the ratio of this count and the number of instances in $\mathcal{C}_{j\neq i}$.
	\item Compute the mean over all instance-based ratios.
\end{enumerate}
The window size of the ensemble $\mathcal{E}$ was set to $w=0.1\times n+1$ observations, which ensures that the window is always large enough to be resilient against 5\% contamination.
The imprecision invariance was computed similarly, only with Equation~\eqref{eq:contamination_tolerance} replaced by Equation~\eqref{eq:imprecision invariance}, $k$ set to $n$ and $\varepsilon_t\sim\mathcal{U}(-10^{-10},10^{-10})$.
Theoretically, $\varepsilon_t$ should be as close to zero as possible.
Yet the results below suggest that the above interval is sufficiently small for asserting the imprecision invariance property of the distance functions under consideration.

\subsection{Results}
In this subsection we present a summary of the results of the three experiments we conducted.
The complete results can be found in the Appendix.

Our first experiment showed that, in terms of classification error rate, there is no significant difference between the ensemble $\mathcal{E}$ and Euc, DTW or Log, but that ED and EDR are significantly worse than $\mathcal{E}$.
A visual representation of this result is depicted in Figure~\ref{fig:friedman}.

\begin{figure}[!ht]
	\includegraphics[width=\linewidth,trim={50 50 50 45}, clip]{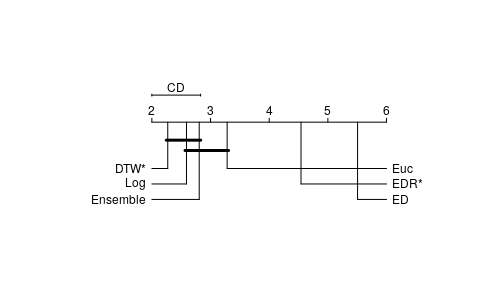}
	\caption{Critical distance plot for the classification error rate. Average ranks are depicted in order, where lower is better and the horizontal bars highlight no significant difference. The ensemble $\mathcal{E}$ is not significantly different from either Euc, DTW or Log, but significantly better than ED and EDR. Functions labeled with an asterisk (*) are not metrics.}
	\label{fig:friedman}
\end{figure}

The second and third experiment revealed that the only distance functions which are both contamination tolerant and imprecision invariant are the ensemble $\mathcal{E}$ and EDR.
An overview of these results can be found in Table~\ref{tab:short_results}.


\begin{table}[!ht]
	\setlength{\tabcolsep}{4pt}
	\caption{Summary of the second and third experiments. The ensemble $\mathcal{E}$ and EDR tolerate contamination on 56 and 79 data sets, respectively, while the other distance measures never tolerate contamination. In terms of imprecision invariance, the first four distances are perfectly invariant, while ED is susceptible and EDR is almost perfectly invariant.}
	\begin{tabular}{l|cccccc}
		&$\mathcal{E}$&Euc&DTW&Log&ED&EDR\\\toprule
		Is a metric? &\checkmark &\checkmark&$\times$&\checkmark&\checkmark&$\times$\\
		\midrule
		Contam. tol. on \# datasets&56&0&0&0&83&81\\
		Imprec. Invar. on \# datasets&83&83&83&83&0&79	\\
		Both on \# datasets&56&0&0&0&0&79\\
		
	\end{tabular}

\label{tab:short_results}
\end{table}

\section{Discussion and Conclusion}
The goal of this work was to propose a distance function that \begin{itemize}
	\item is robust against arbitrary contamination
	\item is invariant to imprecision
	\item fulfills all metric properties
	\item has a competitive classification accuracy
	\item is computationally efficient.
\end{itemize}

The combined results of our theoretical analysis and of the practical evaluation suggest that the ensemble $\mathcal{E}$ has all of these properties.
One might argue that the ensemble $\mathcal{E}$ is no improvement over EDR, since both methods depend on one parameter which influences their classification accuracy.
However, correctly choosing an appropriate tolerance interval for time series with large or time-dependent variance is difficult, while tuning the ensemble's window size is simpler --- it should be just larger than the expected amount of contamination.
Additionally, $\mathcal{E}$ has a significantly better classification accuracy than EDR.
The same holds for the Log-distance, which we believe should be seen as a natural alternative to the Euclidean distance.



Future work might consider less extreme cases of contamination and determine precisely how this affects classification accuracy.
Further, evaluating robust distance functions on a clustering task with arbitrarily contaminated data seems a promising avenue for the future.

\section{Acknowledgments}
Our work was funded by the iDev40 project.
The iDev40 project  has  received  funding  from  the  ECSEL  Joint  Undertaking (JU) under  grant  agreement  No  783163. The JU receives support from the European Union's Horizon  2020  research  and  innovation  programme. It  is  co-funded  by  the  consortium  members, grants from Austria, Germany, Belgium, Italy, Spain and Romania.

%
\bibliographystyle{ACM-Reference-Format}
\bibliography{background}

%
\appendix
\section{Detailed Time Complexity Analysis}

A procedural description of the ensemble $\mathcal{E}$ is listed in Algorithm~\ref{alg:ensemble}.
Since there are no directly listed loops, the ensemble is linear in the complexity of the functions it applies.
Hence, its worst-case time complexity is equal to that of the function that requires the most expensive computation.
When looking at the distance functions, it quickly becomes evident that these can be computed in $\mathcal{O}(n)$, since these lockstep methods look at each observation only once.
The scaling function can be computed in $\mathcal{O}(1)$, so the only non-trivial step is the computation of the sliding median.

\begin{algorithm}[!ht]
	\caption{Ensemble Metric $\mathcal{E}$}
	\begin{algorithmic}
		\REQUIRE $x_t,y_t,w$
		\IF {$w\cong 0 \;(\textrm{mod}\: 2)$}
		\STATE $w\leftarrow w+1$
		\ENDIF
		\STATE $\text{med}_x \leftarrow \ms(x)$
		\STATE $\text{med}_y \leftarrow \ms(y)$
		\STATE $\text{dist}_1 \leftarrow S_e(x,y)$
		\STATE $\text{dist}_2 \leftarrow S_\ell(x,y)$
		\STATE $\text{dist}_3 \leftarrow S_\textrm{Edit}(x,y)$
		\STATE $\text{dist}_4 \leftarrow S_e(\text{med}_x,\text{med}_y)$
		\STATE $\text{dist}_5 \leftarrow S_\ell(\text{med}_x,\text{med}_y)$
		\STATE $\text{dist}_6 \leftarrow S_\textrm{Edit}(\text{med}_x,\text{med}_y)$
		\RETURN $\sqrt{\sum_{i}^{6}\text{dist}_i^2}$

	\end{algorithmic}
	\label{alg:ensemble}
\end{algorithm}
Directly computing the sliding median requires one to sort the observations in all windows.
Since the most efficient sorting algorithm requires $\mathcal{O}(w\log w)$ steps, sorting all $n-w+1$ windows would take $(n-w+1)\times\mathcal{O}(w\log w)=n\times\mathcal{O}(w\log w)$ steps.
We assume that a certain small percentage of the $n$ observations is contaminated, so we must conclude that $w=\mathcal{O}(n)$, which results in a total complexity of $\mathcal{O}(n^2\log n)$.

However, there exist more efficient algorithms for computing the sliding median, and several implementations are available in C libraries.
If one starts by sorting the initial window, one consumes $\mathcal{O}(n\log n)$ time as argued above.
After this, if one keeps the computed window and an index list in the memory, the next window can be computed by removing the oldest observation in $\mathcal{O}(1)$ time and sorting in the new observation in $\mathcal{O}(\log n)$ time.
So, for all windows, this algorithms requires $\mathcal{O}(n\log n) + (n-w)\times\mathcal{O}(\log n)=\mathcal{O}(n \log n)$ steps.
\section{Proofs}
This section contains alternate proofs of the propositions and lemmas presented in the main article.
These proofs do no rely on Kelly's theorem and are easy to verify.
\subsection{Proof of Proposition~\ref{prop:log_distance}}

\begin{proof}
	\textit{Non-negativity}: $\ell(x,y)\ge 0$
	\[\ell(x,y)=\log(1+|x-y|)\ge \log(1)=0\ge 0 \]
	
	\textit{Identity of Indiscernibles}: $\ell(x,y)=0\Leftrightarrow x=y$
	
	This is trivial, since the $\log$ function has exactly one zero at $\log(1+|x-x|)=log(1)=0$
	\vspace{0.5cm}
	
	\textit{Symmetry}
	
	Trivial, due to absolute value.
	\vspace{0.5cm}
	
	\textit{Triangle Inequality}: $\ell(x,z)\le \ell(x,y)+\ell(y,z)$
	\begin{align*}
	\ell(x,z) = \log(1+|x-z|) &\le\log(1+|x-y|)+\log(1+|y-z|)\\
	\log(1+|x-z|)&\le\log\Bigl((1+|x-y|)(1+|y-z|)\Bigr)\\
	1+|x-z| &\le (1+|x-y|)(1+|y-z|)\\
	|x-z| &\le |x-y|+|y-z|+|x-y||y-z|\\
	|x-z| &\le |x-y|+|y-z|
	\end{align*}
\end{proof}

\subsection{Proof of Proposition~\ref{prop:median_pseudometric}}
\begin{proof}
	\textit{Non-negativity}: $M_d(x,y) \ge 0$
	
	This follows trivially from the fact that $d(\cdot) \ge 0$.
	\vspace{0.5cm}
	
	\textit{Symmetry}: $M_d(x,y)=M_d(y,x)$
	
	This also follows trivially from the symmetry of $d(\cdot)$.
	\vspace{0.5cm}
	
	\textit{Triangle Inequality}: $M_d(x,z) \le M_d(x,y)+M_d(y,z)$
		
	\[d(\ms(x),\ms(z)) \le d(\ms(x),\ms(y)+d(\ms(y),\ms(z)\]
	\[a\coloneqq \ms(x),\quad b\coloneqq \ms(y),\quad c\coloneqq \ms(z)\]
	\[d(a,c)\le d(a,b)+d(b,c) \]
\end{proof}

\subsection{Proof of Lemma~\ref{lemma:scaling}}

\begin{proof}
	
	\textit{Non-negativity}: $S(x,y) \ge 0$
	\begin{align*}
		1-\frac{1}{1+d(x,y)} &\ge 0\\
		\frac{1}{1+d(x,y)}&\le 1\\
		1+d(x,y)&\ge 1\\
		d(x,y)&\ge 0
	\end{align*}
	
	\textit{Identity of Indiscernibles}: $S(x,y)=0 \Leftrightarrow x=y$
	
	First we show $S(x,y) = 0 \implies x=y$
	\begin{align*}
		1-\frac{1}{1+d(x,y)} &= 0\\
		1+d(x,y) &=1\\
		d(x,y)&=0\\
	\end{align*}
	
	Now we show $S(x,y) = 0 \impliedby x=y$
	\begin{align*}
		S(x,x)&=1-\frac{1}{1+d(x,x)}\\
		&=1-\frac{1}{1}\\
		&=0
	\end{align*}
	
	\textit{Symmetry}: $S(x,y)=S(y,x)$
	
	This is trivial, since $d(x,y)$ is symmetric.
	\vspace{0.5cm}
	
	\textit{Triangle Inequality}: $S(x,z)\le S(x,y)+S(y,z)$
	
	\begin{align*}
		&1-\frac{1}{1+d(x,z)} \le 1-\frac{1}{1+d(x,y)} + 1-\frac{1}{1+d(y,z)}\\
		&-\frac{1}{1+d(x,z)} \le 1-\frac{1}{1+d(x,y)}-\frac{1}{1+d(y,z)}\\
		&-1 \le (1+d(x,z))-\frac{1+d(x,z)}{1+d(x,y)}-\frac{1+d(x,z)}{1+d(y,z)}\\
		&-\bigl(1+d(x,y)\bigr)\bigl(1+d(y,z)\bigr) \le \bigl(1+d(x,z)\bigr)\Bigl(\bigl(1+d(x,y)\bigr)\bigl(1+d(y,z)\bigr)\\  &  \hspace{4cm}-\bigl(1+d(x,y)\bigr)-\bigl(1+d(y,z)\bigr)\Bigr)\\
		&-\bigl(1+d(x,y)\bigr)\bigl(1+d(y,z)\bigr) \le\bigl(1+d(x,z)\bigr) \bigl(d(x,y)d(y,z)-1\bigr)\\
		&-1-d(x,y)d(y,z)-d(x,y)-d(y,z)\le d(x,y)d(y,z)-1-d(x,z)\\
		&\hspace{5	cm}+d(x,y)d(y,z)d(x,z)\\
		&d(x,z)-d(x,y)-d(y,z) \le 0 \le d(x,y)d(y,z)d(x,z)\\
	\end{align*}
	
\end{proof}

\section{Full Empirical Results}
This section contains tables with the complete empirical results.
The classification accuracy of Dynamic Time Warping was taken from the results published in the UCR archive~\cite{UCRArchive2018}.
The names of the used datasets and the complete results can be found below.
The first two tables list the classification error rate per distance function.
The second two table compare the contamination tolerance and the imprecision invariance per distance function.

\begin{table*}[!ht]
	\begin{tabular}{l|cccccc}
		Dataset&\multicolumn{6}{c}{\textit{Error rate}}\\
		&$\mathcal{E}$&Euc&DTW&Log&ED&EDR\\
		\toprule
		ACSF1 & 0.28 & 0.46 & 0.36 & 0.17 & 0.90 & 0.48 \\ 
		Adiac & 0.42 & 0.39 & 0.40 & 0.40 & 0.97 & 0.54 \\ 
		ArrowHead & 0.21 & 0.20 & 0.30 & 0.20 & 0.61 & 0.31 \\ 
		Beef & 0.33 & 0.33 & 0.37 & 0.40 & 0.80 & 0.50 \\ 
		BeetleFly & 0.25 & 0.25 & 0.30 & 0.35 & 0.50 & 0.40 \\ 
		BirdChicken & 0.45 & 0.45 & 0.25 & 0.35 & 0.50 & 0.35 \\ 
		BME & 0.25 & 0.17 & 0.10 & 0.20 & 0.65 & 0.59 \\ 
		Car & 0.27 & 0.27 & 0.27 & 0.28 & 0.77 & 0.30 \\ 
		CBF & 0.06 & 0.15 & 0.00 & 0.11 & 0.67 & 0.38 \\ 
		Chinatown & 0.06 & 0.05 & 0.04 & 0.05 & 0.45 & 0.03 \\ 
		Coffee & 0.04 & 0.00 & 0.00 & 0.07 & 0.46 & 0.11 \\ 
		Computers & 0.47 & 0.42 & 0.30 & 0.42 & 0.50 & 0.42 \\ 
		CricketX & 0.37 & 0.42 & 0.25 & 0.37 & 0.92 & 0.71 \\ 
		CricketY & 0.38 & 0.43 & 0.26 & 0.34 & 0.92 & 0.70 \\ 
		CricketZ & 0.38 & 0.41 & 0.25 & 0.36 & 0.93 & 0.72 \\ 
		DiatomSizeReduction & 0.07 & 0.07 & 0.03 & 0.08 & 0.70 & 0.08 \\ 
		DistalPhalanxOutlineAgeGroup & 0.35 & 0.37 & 0.23 & 0.33 & 0.58 & 0.26 \\ 
		DistalPhalanxOutlineCorrect & 0.29 & 0.28 & 0.28 & 0.26 & 0.42 & 0.30 \\ 
		DistalPhalanxTW & 0.37 & 0.37 & 0.41 & 0.37 & 0.70 & 0.34 \\ 
		Earthquakes & 0.35 & 0.29 & 0.28 & 0.33 & 0.75 & 0.75 \\ 
		ECG200 & 0.12 & 0.12 & 0.23 & 0.11 & 0.64 & 0.20 \\ 
		ECGFiveDays & 0.18 & 0.20 & 0.23 & 0.21 & 0.50 & 0.36 \\ 
		EOGHorizontalSignal & 0.60 & 0.58 & 0.50 & 0.67 & 0.83 & 0.83 \\ 
		EOGVerticalSignal & 0.68 & 0.56 & 0.55 & 0.73 & 0.86 & 0.86 \\ 
		EthanolLevel & 0.72 & 0.73 & 0.72 & 0.69 & 0.75 & 0.69 \\ 
		FaceFour & 0.22 & 0.22 & 0.17 & 0.15 & 0.70 & 0.26 \\ 
		FiftyWords & 0.34 & 0.37 & 0.31 & 0.31 & 0.97 & 0.61 \\ 
		Fish & 0.22 & 0.22 & 0.18 & 0.23 & 0.88 & 0.30 \\ 
		Fungi & 0.12 & 0.18 & 0.16 & 0.08 & 0.96 & 0.51 \\ 
		GunPoint & 0.07 & 0.09 & 0.09 & 0.05 & 0.51 & 0.24 \\ 
		GunPointAgeSpan & 0.03 & 0.10 & 0.08 & 0.00 & 0.49 & 0.17 \\ 
		GunPointMaleVersusFemale & 0.01 & 0.03 & 0.00 & 0.01 & 0.47 & 0.21 \\ 
		GunPointOldVersusYoung & 0.00 & 0.05 & 0.16 & 0.00 & 0.52 & 0.01 \\ 
		Ham & 0.42 & 0.40 & 0.53 & 0.50 & 0.51 & 0.43 \\ 
		HandOutlines & 0.14 & 0.14 & 0.12 & 0.14 & 0.46 & 0.18 \\ 
		Haptics & 0.63 & 0.63 & 0.62 & 0.64 & 0.79 & 0.64 \\ 
		Herring & 0.47 & 0.48 & 0.47 & 0.41 & 0.41 & 0.55 \\ 
		HouseTwenty & 0.14 & 0.34 & 0.08 & 0.18 & 0.32 & 0.37 \\ 
		InlineSkate & 0.66 & 0.66 & 0.62 & 0.64 & 0.83 & 0.69 \\ 
		InsectEPGRegularTrain & 0.00 & 0.32 & 0.13 & 0.00 & 0.00 & 0.00 \\ 
		InsectEPGSmallTrain & 0.00 & 0.34 & 0.27 & 0.00 & 0.00 & 0.00 \\ 
		LargeKitchenAppliances & 0.47 & 0.51 & 0.21 & 0.42 & 0.67 & 0.67 \\ 
		Lightning2 & 0.18 & 0.25 & 0.13 & 0.18 & 0.46 & 0.51 \\ 
		Lightning7 & 0.33 & 0.42 & 0.27 & 0.26 & 0.74 & 0.71 \\ 
		Meat & 0.07 & 0.07 & 0.07 & 0.07 & 0.65 & 0.07 \\

	\end{tabular}
	\caption{Classification error per distance function, Part 1. The ensemble $\mathcal{E}$ is not significantly different from DTW, Log or Euc. ED and EDR frequently have a higher classification error than the remaining functions. Unsurprisingly, DTW has the lowest overall error rate. Its elastic nature likely superior classification accuracy over lockstep distance functions.}
	\label{tab:results11}
\end{table*}
\begin{table*}
\begin{tabular}{l|cccccc}
	Dataset&\multicolumn{6}{c}{\textit{Error rate}}\\
	&$\mathcal{E}$&Euc&DTW&Log&ED&EDR\\
		\toprule
		MedicalImages & 0.31 & 0.32 & 0.26 & 0.29 & 0.49 & 0.53 \\ 
		MiddlePhalanxOutlineAgeGroup & 0.45 & 0.48 & 0.50 & 0.47 & 0.44 & 0.49 \\ 
		MiddlePhalanxOutlineCorrect & 0.25 & 0.23 & 0.30 & 0.25 & 0.43 & 0.24 \\ 
		MiddlePhalanxTW & 0.49 & 0.49 & 0.49 & 0.45 & 0.73 & 0.47 \\ 
		OliveOil & 0.13 & 0.13 & 0.17 & 0.17 & 0.83 & 0.60 \\ 
		OSULeaf & 0.48 & 0.48 & 0.41 & 0.45 & 0.90 & 0.55 \\ 
		PigAirwayPressure & 0.88 & 0.94 & 0.89 & 0.90 & 0.90 & 0.94 \\ 
		PigArtPressure & 0.73 & 0.88 & 0.75 & 0.72 & 0.93 & 0.86 \\ 
		PigCVP & 0.87 & 0.92 & 0.85 & 0.87 & 0.91 & 0.88 \\ 
		Plane & 0.04 & 0.04 & 0.00 & 0.04 & 0.85 & 0.01 \\ 
		PowerCons & 0.03 & 0.07 & 0.12 & 0.04 & 0.20 & 0.32 \\ 
		ProximalPhalanxOutlineAgeGroup & 0.23 & 0.21 & 0.20 & 0.22 & 0.57 & 0.22 \\ 
		ProximalPhalanxOutlineCorrect & 0.24 & 0.19 & 0.22 & 0.22 & 0.32 & 0.24 \\ 
		ProximalPhalanxTW & 0.28 & 0.29 & 0.24 & 0.30 & 0.98 & 0.27 \\ 
		RefrigerationDevices & 0.57 & 0.61 & 0.54 & 0.52 & 0.67 & 0.69 \\ 
		Rock & 0.44 & 0.16 & 0.40 & 0.34 & 0.62 & 0.46 \\ 
		ScreenType & 0.65 & 0.64 & 0.60 & 0.62 & 0.67 & 0.72 \\ 
		SemgHandGenderCh2 & 0.13 & 0.24 & 0.20 & 0.22 & 0.35 & 0.36 \\ 
		SemgHandMovementCh2 & 0.24 & 0.63 & 0.42 & 0.55 & 0.83 & 0.82 \\ 
		SemgHandSubjectCh2 & 0.16 & 0.60 & 0.27 & 0.42 & 0.80 & 0.77 \\ 
		ShapeletSim & 0.52 & 0.46 & 0.35 & 0.49 & 0.50 & 0.50 \\ 
		ShapesAll & 0.24 & 0.25 & 0.23 & 0.24 & 0.98 & 0.37 \\ 
		SmallKitchenAppliances & 0.46 & 0.66 & 0.36 & 0.51 & 0.66 & 0.74 \\ 
		SmoothSubspace & 0.02 & 0.09 & 0.17 & 0.00 & 0.67 & 0.15 \\ 
		SonyAIBORobotSurface1 & 0.24 & 0.30 & 0.27 & 0.31 & 0.57 & 0.37 \\ 
		SonyAIBORobotSurface2 & 0.13 & 0.14 & 0.17 & 0.12 & 0.38 & 0.21 \\ 
		Strawberry & 0.06 & 0.05 & 0.06 & 0.05 & 0.36 & 0.05 \\ 
		SwedishLeaf & 0.22 & 0.21 & 0.21 & 0.22 & 0.93 & 0.32 \\ 
		Symbols & 0.10 & 0.10 & 0.05 & 0.10 & 0.84 & 0.20 \\ 
		SyntheticControl & 0.07 & 0.12 & 0.01 & 0.13 & 0.83 & 0.35 \\ 
		ToeSegmentation1 & 0.29 & 0.32 & 0.23 & 0.27 & 0.46 & 0.36 \\ 
		ToeSegmentation2 & 0.15 & 0.19 & 0.16 & 0.12 & 0.18 & 0.21 \\ 
		Trace & 0.31 & 0.24 & 0.00 & 0.21 & 0.76 & 0.32 \\ 
		UMD & 0.25 & 0.24 & 0.01 & 0.24 & 0.48 & 0.51 \\ 
		Wine & 0.33 & 0.39 & 0.43 & 0.35 & 0.50 & 0.48 \\ 
		WordSynonyms & 0.38 & 0.38 & 0.35 & 0.34 & 0.91 & 0.63 \\ 
		Worms & 0.48 & 0.55 & 0.42 & 0.56 & 0.44 & 0.64 \\ 
		WormsTwoClass & 0.36 & 0.39 & 0.38 & 0.42 & 0.44 & 0.43 \\ 
	\end{tabular}
	\caption{Classification error per distance function, Part 2.}
	\label{tab:results12}
\end{table*}
\begin{table*}[!ht]
	\begin{tabular}{l|cccccc|cccccc}
		Dataset& \multicolumn{6}{c}{\textit{Contamination Tolerance}}&
		\multicolumn{6}{c}{\textit{Imprecision Invariance}}\\
		&$\mathcal{E}$&Euc&DTW&Log&ED&EDR&$\mathcal{E}$&Euc&DTW&Log&ED&EDR\\
		\toprule
		ACSF1 & 0.68 & 0.00 & 0.00 & 0.00 & 1.00 & 1.00 & 1.00 & 1.00 & 1.00 & 1.00 & 0.00 & 0.98 \\ 
		Adiac & 0.49 & 0.00 & 0.00 & 0.00 & 1.00 & 0.94 & 1.00 & 1.00 & 1.00 & 1.00 & 0.00 & 0.98 \\ 
		ArrowHead & 0.98 & 0.00 & 0.00 & 0.00 & 1.00 & 1.00 & 1.00 & 1.00 & 1.00 & 1.00 & 0.00 & 1.00 \\ 
		Beef & 0.85 & 0.00 & 0.00 & 0.00 & 1.00 & 1.00 & 1.00 & 1.00 & 1.00 & 1.00 & 0.00 & 1.00 \\ 
		BeetleFly & 1.00 & 0.00 & 0.00 & 0.00 & 1.00 & 1.00 & 1.00 & 1.00 & 1.00 & 1.00 & 0.00 & 1.00 \\ 
		BirdChicken & 1.00 & 0.00 & 0.00 & 0.00 & 1.00 & 1.00 & 1.00 & 1.00 & 1.00 & 1.00 & 0.00 & 1.00 \\ 
		BME & 1.00 & 0.00 & 0.00 & 0.00 & 1.00 & 1.00 & 1.00 & 1.00 & 1.00 & 1.00 & 0.00 & 1.00 \\ 
		Car & 0.99 & 0.00 & 0.00 & 0.00 & 1.00 & 1.00 & 1.00 & 1.00 & 1.00 & 1.00 & 0.00 & 1.00 \\ 
		CBF & 1.00 & 0.00 & 0.00 & 0.00 & 1.00 & 1.00 & 1.00 & 1.00 & 1.00 & 1.00 & 0.00 & 1.00 \\ 
		Chinatown & 1.00 & 0.00 & 0.00 & 0.00 & 1.00 & 1.00 & 1.00 & 1.00 & 1.00 & 1.00 & 0.00 & 1.00 \\ 
		Coffee & 0.30 & 0.00 & 0.00 & 0.00 & 1.00 & 1.00 & 1.00 & 1.00 & 1.00 & 1.00 & 0.00 & 1.00 \\ 
		Computers & 1.00 & 0.00 & 0.00 & 0.00 & 1.00 & 1.00 & 1.00 & 1.00 & 1.00 & 1.00 & 0.00 & 0.99 \\ 
		CricketX & 1.00 & 0.00 & 0.00 & 0.00 & 1.00 & 1.00 & 1.00 & 1.00 & 1.00 & 1.00 & 0.00 & 1.00 \\ 
		CricketY & 1.00 & 0.00 & 0.00 & 0.00 & 1.00 & 1.00 & 1.00 & 1.00 & 1.00 & 1.00 & 0.00 & 1.00 \\ 
		CricketZ & 1.00 & 0.00 & 0.00 & 0.00 & 1.00 & 1.00 & 1.00 & 1.00 & 1.00 & 1.00 & 0.00 & 1.00 \\ 
		DiatomSizeReduction & 0.95 & 0.00 & 0.00 & 0.00 & 1.00 & 1.00 & 1.00 & 1.00 & 1.00 & 1.00 & 0.00 & 1.00 \\ 
		DistalPhalanxOutlineAgeGroup & 0.95 & 0.00 & 0.00 & 0.00 & 1.00 & 1.00 & 1.00 & 1.00 & 1.00 & 1.00 & 0.00 & 1.00 \\ 
		DistalPhalanxOutlineCorrect & 0.66 & 0.00 & 0.00 & 0.00 & 1.00 & 1.00 & 1.00 & 1.00 & 1.00 & 1.00 & 0.00 & 1.00 \\ 
		DistalPhalanxTW & 0.94 & 0.00 & 0.00 & 0.00 & 1.00 & 1.00 & 1.00 & 1.00 & 1.00 & 1.00 & 0.00 & 1.00 \\ 
		Earthquakes & 1.00 & 0.00 & 0.00 & 0.00 & 1.00 & 1.00 & 1.00 & 1.00 & 1.00 & 1.00 & 0.00 & 1.00 \\ 
		ECG200 & 1.00 & 0.00 & 0.00 & 0.00 & 1.00 & 1.00 & 1.00 & 1.00 & 1.00 & 1.00 & 0.00 & 1.00 \\ 
		ECGFiveDays & 1.00 & 0.00 & 0.00 & 0.00 & 1.00 & 1.00 & 1.00 & 1.00 & 1.00 & 1.00 & 0.00 & 1.00 \\ 
		EOGHorizontalSignal & 1.00 & 0.00 & 0.00 & 0.00 & 1.00 & 1.00 & 1.00 & 1.00 & 1.00 & 1.00 & 0.16 & 1.00 \\ 
		EOGVerticalSignal & 1.00 & 0.00 & 0.00 & 0.00 & 1.00 & 1.00 & 1.00 & 1.00 & 1.00 & 1.00 & 0.15 & 1.00 \\ 
		EthanolLevel & 0.85 & 0.00 & 0.00 & 0.00 & 1.00 & 1.00 & 1.00 & 1.00 & 1.00 & 1.00 & 0.00 & 1.00 \\ 
		FaceFour & 1.00 & 0.00 & 0.00 & 0.00 & 1.00 & 1.00 & 1.00 & 1.00 & 1.00 & 1.00 & 0.00 & 1.00 \\ 
		FiftyWords & 1.00 & 0.00 & 0.00 & 0.00 & 1.00 & 1.00 & 1.00 & 1.00 & 1.00 & 1.00 & 0.00 & 1.00 \\ 
		Fish & 0.97 & 0.00 & 0.00 & 0.00 & 1.00 & 1.00 & 1.00 & 1.00 & 1.00 & 1.00 & 0.00 & 1.00 \\ 
		Fungi & 1.00 & 0.00 & 0.00 & 0.00 & 1.00 & 1.00 & 1.00 & 1.00 & 1.00 & 1.00 & 0.00 & 1.00 \\ 
		GunPoint & 1.00 & 0.00 & 0.00 & 0.00 & 1.00 & 1.00 & 1.00 & 1.00 & 1.00 & 1.00 & 0.00 & 1.00 \\ 
		GunPointAgeSpan & 1.00 & 0.00 & 0.00 & 0.00 & 1.00 & 1.00 & 1.00 & 1.00 & 1.00 & 1.00 & 0.05 & 1.00 \\ 
		GunPointMaleVersusFemale & 1.00 & 0.00 & 0.00 & 0.00 & 1.00 & 1.00 & 1.00 & 1.00 & 1.00 & 1.00 & 0.07 & 1.00 \\ 
		GunPointOldVersusYoung & 1.00 & 0.00 & 0.00 & 0.00 & 1.00 & 1.00 & 1.00 & 1.00 & 1.00 & 1.00 & 0.04 & 1.00 \\ 
		Ham & 1.00 & 0.00 & 0.00 & 0.00 & 1.00 & 1.00 & 1.00 & 1.00 & 1.00 & 1.00 & 0.00 & 1.00 \\ 
		HandOutlines & 0.57 & 0.00 & 0.00 & 0.00 & 1.00 & 1.00 & 1.00 & 1.00 & 1.00 & 1.00 & 0.00 & 1.00 \\ 
		Haptics & 1.00 & 0.00 & 0.00 & 0.00 & 1.00 & 1.00 & 1.00 & 1.00 & 1.00 & 1.00 & 0.00 & 1.00 \\ 
		Herring & 0.62 & 0.00 & 0.00 & 0.00 & 1.00 & 1.00 & 1.00 & 1.00 & 1.00 & 1.00 & 0.00 & 1.00 \\ 
		HouseTwenty & 1.00 & 0.00 & 0.00 & 0.00 & 1.00 & 1.00 & 1.00 & 1.00 & 1.00 & 1.00 & 0.18 & 1.00 \\ 
		InlineSkate & 1.00 & 0.00 & 0.00 & 0.00 & 1.00 & 1.00 & 1.00 & 1.00 & 1.00 & 1.00 & 0.00 & 1.00 \\ 
		InsectEPGRegularTrain & 1.00 & 0.00 & 0.00 & 0.00 & 1.00 & 1.00 & 1.00 & 1.00 & 1.00 & 1.00 & 0.00 & 1.00 \\ 
		InsectEPGSmallTrain & 1.00 & 0.00 & 0.00 & 0.00 & 1.00 & 1.00 & 1.00 & 1.00 & 1.00 & 1.00 & 0.00 & 1.00 \\ 
		LargeKitchenAppliances & 0.96 & 0.00 & 0.00 & 0.00 & 1.00 & 1.00 & 1.00 & 1.00 & 1.00 & 1.00 & 0.00 & 1.00 \\ 
		Lightning2 & 1.00 & 0.00 & 0.00 & 0.00 & 1.00 & 1.00 & 1.00 & 1.00 & 1.00 & 1.00 & 0.00 & 1.00 \\ 
		Lightning7 & 1.00 & 0.00 & 0.00 & 0.00 & 1.00 & 1.00 & 1.00 & 1.00 & 1.00 & 1.00 & 0.00 & 1.00 \\ 
		Meat & 0.00 & 0.00 & 0.00 & 0.00 & 1.00 & 1.00 & 1.00 & 1.00 & 1.00 & 1.00 & 0.00 & 1.00 \\

	\end{tabular}
	\caption{Contamination tolerance and imprecision invariance per distance function, Part 1.
	 The numbers indicate the percentage of non-class members which have a larger 
	 distance to the considered in-class time series, averaged over all classes.
	 ED perfectly tolerates contamination, while $\mathcal{E}$ and EDR commonly, but not always achieve perfect scores. In terms of imprecision invariance, all measures besides ED appear to fulfill this property. Altogether, EDR with a median absolute deviation-based tolerance interval appears to have the highest combined robustness.}
	\label{tab:results21}
\end{table*}
\begin{table*}[!ht]
	\begin{tabular}{l|cccccc|cccccc}
		Dataset& \multicolumn{6}{c}{\textit{Contamination Tolerance}}&
		\multicolumn{6}{c}{\textit{Imprecision Invariance}}\\
		&$\mathcal{E}$&Euc&DTW&Log&ED&EDR&$\mathcal{E}$&Euc&DTW&Log&ED&EDR\\
		\toprule
		
		MedicalImages & 1.00 & 0.00 & 0.00 & 0.00 & 1.00 & 1.00 & 1.00 & 1.00 & 1.00 & 1.00 & 0.00 & 1.00 \\ 
		MiddlePhalanxOutlineAgeGroup & 0.69 & 0.00 & 0.00 & 0.00 & 1.00 & 1.00 & 1.00 & 1.00 & 1.00 & 1.00 & 0.00 & 1.00 \\ 
		MiddlePhalanxOutlineCorrect & 0.54 & 0.00 & 0.00 & 0.00 & 1.00 & 1.00 & 1.00 & 1.00 & 1.00 & 1.00 & 0.00 & 1.00 \\ 
		MiddlePhalanxTW & 0.84 & 0.00 & 0.00 & 0.00 & 1.00 & 1.00 & 1.00 & 1.00 & 1.00 & 1.00 & 0.00 & 1.00 \\ 
		OliveOil & 0.00 & 0.00 & 0.00 & 0.00 & 1.00 & 0.01 & 1.00 & 1.00 & 1.00 & 1.00 & 0.00 & 0.51 \\ 
		OSULeaf & 1.00 & 0.00 & 0.00 & 0.00 & 1.00 & 1.00 & 1.00 & 1.00 & 1.00 & 1.00 & 0.00 & 1.00 \\ 
		PigAirwayPressure & 1.00 & 0.00 & 0.00 & 0.00 & 1.00 & 1.00 & 1.00 & 1.00 & 1.00 & 1.00 & 0.00 & 1.00 \\ 
		PigArtPressure & 1.00 & 0.00 & 0.00 & 0.00 & 1.00 & 1.00 & 1.00 & 1.00 & 1.00 & 1.00 & 0.09 & 1.00 \\ 
		PigCVP & 1.00 & 0.00 & 0.00 & 0.00 & 1.00 & 1.00 & 1.00 & 1.00 & 1.00 & 1.00 & 0.00 & 1.00 \\ 
		Plane & 1.00 & 0.00 & 0.00 & 0.00 & 1.00 & 1.00 & 1.00 & 1.00 & 1.00 & 1.00 & 0.00 & 1.00 \\ 
		PowerCons & 1.00 & 0.00 & 0.00 & 0.00 & 1.00 & 1.00 & 1.00 & 1.00 & 1.00 & 1.00 & 0.00 & 1.00 \\ 
		ProximalPhalanxOutlineAgeGroup & 0.88 & 0.00 & 0.00 & 0.00 & 1.00 & 1.00 & 1.00 & 1.00 & 1.00 & 1.00 & 0.00 & 1.00 \\ 
		ProximalPhalanxOutlineCorrect & 0.52 & 0.00 & 0.00 & 0.00 & 1.00 & 1.00 & 1.00 & 1.00 & 1.00 & 1.00 & 0.00 & 1.00 \\ 
		ProximalPhalanxTW & 0.71 & 0.00 & 0.00 & 0.00 & 1.00 & 1.00 & 1.00 & 1.00 & 1.00 & 1.00 & 0.00 & 1.00 \\ 
		RefrigerationDevices & 0.97 & 0.00 & 0.00 & 0.00 & 1.00 & 1.00 & 1.00 & 1.00 & 1.00 & 1.00 & 0.00 & 1.00 \\ 
		Rock & 1.00 & 0.00 & 0.00 & 0.00 & 1.00 & 1.00 & 1.00 & 1.00 & 1.00 & 1.00 & 0.07 & 1.00 \\ 
		ScreenType & 1.00 & 0.00 & 0.00 & 0.00 & 1.00 & 1.00 & 1.00 & 1.00 & 1.00 & 1.00 & 0.00 & 1.00 \\ 
		SemgHandGenderCh2 & 1.00 & 0.00 & 0.00 & 0.00 & 1.00 & 1.00 & 1.00 & 1.00 & 1.00 & 1.00 & 0.02 & 1.00 \\ 
		SemgHandMovementCh2 & 1.00 & 0.00 & 0.00 & 0.00 & 1.00 & 1.00 & 1.00 & 1.00 & 1.00 & 1.00 & 0.02 & 1.00 \\ 
		SemgHandSubjectCh2 & 1.00 & 0.00 & 0.00 & 0.00 & 1.00 & 1.00 & 1.00 & 1.00 & 1.00 & 1.00 & 0.02 & 1.00 \\ 
		ShapeletSim & 1.00 & 0.00 & 0.00 & 0.00 & 1.00 & 1.00 & 1.00 & 1.00 & 1.00 & 1.00 & 0.00 & 1.00 \\ 
		ShapesAll & 1.00 & 0.00 & 0.00 & 0.00 & 1.00 & 1.00 & 1.00 & 1.00 & 1.00 & 1.00 & 0.00 & 1.00 \\ 
		SmallKitchenAppliances & 1.00 & 0.00 & 0.00 & 0.00 & 1.00 & 1.00 & 1.00 & 1.00 & 1.00 & 1.00 & 0.00 & 1.00 \\ 
		SmoothSubspace & 0.00 & 0.00 & 0.00 & 0.00 & 1.00 & 1.00 & 1.00 & 1.00 & 1.00 & 1.00 & 0.00 & 1.00 \\ 
		SonyAIBORobotSurface1 & 1.00 & 0.00 & 0.00 & 0.00 & 1.00 & 1.00 & 1.00 & 1.00 & 1.00 & 1.00 & 0.00 & 1.00 \\ 
		SonyAIBORobotSurface2 & 1.00 & 0.00 & 0.00 & 0.00 & 1.00 & 1.00 & 1.00 & 1.00 & 1.00 & 1.00 & 0.00 & 1.00 \\ 
		Strawberry & 0.50 & 0.00 & 0.00 & 0.00 & 1.00 & 1.00 & 1.00 & 1.00 & 1.00 & 1.00 & 0.00 & 1.00 \\ 
		SwedishLeaf & 1.00 & 0.00 & 0.00 & 0.00 & 1.00 & 1.00 & 1.00 & 1.00 & 1.00 & 1.00 & 0.00 & 1.00 \\ 
		Symbols & 1.00 & 0.00 & 0.00 & 0.00 & 1.00 & 1.00 & 1.00 & 1.00 & 1.00 & 1.00 & 0.00 & 1.00 \\ 
		SyntheticControl & 1.00 & 0.00 & 0.00 & 0.00 & 1.00 & 1.00 & 1.00 & 1.00 & 1.00 & 1.00 & 0.00 & 1.00 \\ 
		ToeSegmentation1 & 1.00 & 0.00 & 0.00 & 0.00 & 1.00 & 1.00 & 1.00 & 1.00 & 1.00 & 1.00 & 0.00 & 1.00 \\ 
		ToeSegmentation2 & 1.00 & 0.00 & 0.00 & 0.00 & 1.00 & 1.00 & 1.00 & 1.00 & 1.00 & 1.00 & 0.00 & 1.00 \\ 
		Trace & 1.00 & 0.00 & 0.00 & 0.00 & 1.00 & 1.00 & 1.00 & 1.00 & 1.00 & 1.00 & 0.00 & 1.00 \\ 
		UMD & 1.00 & 0.00 & 0.00 & 0.00 & 1.00 & 1.00 & 1.00 & 1.00 & 1.00 & 1.00 & 0.00 & 1.00 \\ 
		Wine & 0.00 & 0.00 & 0.00 & 0.00 & 1.00 & 1.00 & 1.00 & 1.00 & 1.00 & 1.00 & 0.00 & 1.00 \\ 
		WordSynonyms & 1.00 & 0.00 & 0.00 & 0.00 & 1.00 & 1.00 & 1.00 & 1.00 & 1.00 & 1.00 & 0.00 & 1.00 \\ 
		Worms & 1.00 & 0.00 & 0.00 & 0.00 & 1.00 & 1.00 & 1.00 & 1.00 & 1.00 & 1.00 & 0.00 & 1.00 \\ 
		WormsTwoClass & 1.00 & 0.00 & 0.00 & 0.00 & 1.00 & 1.00 & 1.00 & 1.00 & 1.00 & 1.00 & 0.00 & 1.00 \\

	\end{tabular}
	\caption{Contamination tolerance and imprecision invariance per distance function, Part 2.}
	\label{tab:results22}
\end{table*}
\end{document}